\ifdefined\pdfdraftmode
\fi
\ifdefined\pdfvariable
  \pdfvariable draftmode 0
\fi
\documentclass[11pt]{article}

\PassOptionsToPackage{final}{graphics}
\PassOptionsToPackage{final}{graphicx}

\usepackage[utf8]{inputenc}
\usepackage[T1]{fontenc}
\usepackage[margin=1in]{geometry}
\usepackage{times}
\usepackage[numbers,sort&compress]{natbib}
\usepackage{amsthm}
\usepackage{url}
\usepackage{booktabs}
\usepackage{amsmath}
\usepackage{mathtools}
\usepackage{amssymb}
\usepackage{amsfonts}
\usepackage{graphicx}
\graphicspath{{figures/}}
\usepackage{xcolor}
\usepackage{enumitem}
\usepackage{placeins}
\usepackage{flafter}
\usepackage{float}
\usepackage{wrapfig}
\usepackage{algorithm2e}
\newfloat{routedalgorithm}{tbp}{loa}
\floatname{routedalgorithm}{Algorithm}
\usepackage{hyperref}
\hypersetup{hidelinks}

\makeatletter
\AtBeginDocument{%
    \setkeys{Gin}{draft=false}%
    \Gin@draftfalse
}
\makeatother

\title{The Dynamics of Collapse and Recovery in Clustered Federated Learning}
\author{Priyanka Sinha \qquad Snigdhansu Chatterjee\\[0.4em]
\small University of Maryland, Baltimore County\\
\small \texttt{psinha1@umbc.edu} \qquad \texttt{snigchat@umbc.edu}}
\date{}

\newtheorem{theorem}{Theorem}

\theoremstyle{definition}

\theoremstyle{remark}

\begin{document}

\maketitle

\begin{abstract}
Clustered federated learning (CFL) can suffer from cluster collapse. In particular, under unfavorable initialization, early client assignments concentrate heterogeneous groups on one cluster model, which is then trained toward a pooled compromise, while another model receives no updates. Since the current models determine client routing and routing determines subsequent model updates, we formulate clustered federated learning as control of an unknown dynamical system, with cluster models as the evolving state, client routing patterns as the control input, and routing interventions as state dependent controllers. In a two group linear model, we identify an initialization region in which greedy routing makes collapse self reinforcing. We then derive finite horizon error dynamics showing how routing interventions can reverse collapse by repeatedly training the starved model, primarily with updates from its compatible group. Motivated by this condition, each client uses blocked EXP3 to choose between greedy routing and interventions that redirect updates toward unused models. Experiments show that with simpler predictors and faster specialist formation, autonomous controller selection recovers usable specialists from unfavorable starts that trap standard CFL baselines. In transformed FEMNIST, the synchronized-entry configuration with persistent commitment recovers usable specialists in settings where the autonomous controller configuration often does not. Code and scripts for reproducing the experiments and figures are available at \url{https://github.com/prsinha1/FedRecover}.

\end{abstract}

\section{Introduction}
\label{sec:introduction}

Clustered federated learning (CFL) addresses client heterogeneity by maintaining several shared models and assigning each client to the model that best fits its local data \cite{ghosh2021efficientframeworkclusteredfederated,mansour2020approachespersonalizationapplicationsfederated}. However, under an unfavorable initialization, CFL is widely known to be prone to cluster collapse, where clients from several latent groups may initially prefer the same model, while other models are starved of training. Under strong heterogeneity, such a surviving model is trained across conflicting client distributions and becomes a compromised pooled solution, leading to optimization instability \cite{NEURIPS2023_8668fdc7}. Initialization sensitivity is therefore a consequence of a deeper dynamical failure, where early routing decisions determine which models improve, and those improvements determine later routing decisions.


Existing CFL methods either assume or seek favorable initialization, or modify the clustering architecture to mitigate collapse \cite{ghosh2021efficientframeworkclusteredfederated, nguyen2022beginimpactpretraininginitialization, harshvardhan2022improvedalgorithmclusteredfederated, NEURIPS2023_8668fdc7}. We instead ask whether a fixed collection of cluster models can recover after unfavorable initialization has already induced collapsed routing.

To answer this question, we make the feedback loop between client side routing and server side model updates explicit by viewing CFL as a controlled dynamical system. The cluster models \(W^t\) form the state, the routing pattern \(M^t\) is the control input, and the server update defines the state evolution, $ W^{t+1}=\mathcal F(W^t,M^t;\mathcal E), $ where \(\mathcal E\) represents the unknown client objectives and latent group structure. We introduce \(\Phi^t\) as a feedback controller that maps observable client information to routing decisions inducing \(M^t\). Preventing collapse then becomes a control problem over an unknown and partially observed dynamical system, paralleling the control theoretic treatment of interactive personalized recommendation systems \cite{depasquale2026recommendersystemscontrolsystems}.

\noindent\textbf{Our contributions.}
First, in a two group linear model, we characterize an unfavorable initialization basin in which greedy routing permanently starves one cluster model while the active model converges to a pooled solution. Second, for arbitrary realized routing sequences, we derive a finite horizon recovery condition requiring sufficient updates to the disadvantaged model while controlling incompatible client contributions, which we call persistent selective exposure. Third, motivated by this condition, we design complementary routing controllers and let each client learn among them online. Experiments show that decentralized control recovers distinct usable models when specialization forms quickly. For transformed FEMNIST, we compare autonomous controller selection with synchronized entry under the specified controller settings.


\section{Initialization induced failure and recovery}
\label{sec:routing_recovery}

We consider $n$ clients each coming from one of $G$ heterogeneous latent groups $I_1,\ldots,I_G$. A client $i\in I_g$ has local distribution $\mathcal D_i$ and shares the group population objective $L_g(w)=\mathbb E_{z\sim\mathcal D_i}[\ell(w;z)]$. Group membership is hidden from both clients and server. At round $t$, the server maintains cluster models $W^t=(w_1^t,\ldots,w_K^t)$. Greedy CFL assigns the client to a lowest loss model and updates each cluster using only the clients assigned to it.

\textbf{Favorable and unfavorable initialization basins:}
To demonstrate how the assignment and update feedback determines the training trajectory, consider two latent groups, relabeled $0$ and $1$, and two cluster models. For group $g\in\{0,1\}$, samples $(x,y)$ satisfy $y=x^\top\theta_g+\epsilon$, where $\theta_g\in\mathbb R^d$ is the group specific regression parameter. We assume $\mathbb E[x]=0$, $\mathbb E[xx^\top]=I$, and independent zero mean observation noise. Under squared loss, the population objective is, up to an additive noise constant, $L_g(w)=\tfrac12\|w-\theta_g\|_2^2$, with $\nabla L_g(w)=w-\theta_g$. Thus $\theta_g$ is the unique population minimizer for group $g$. Let $\Delta=\|\theta_1-\theta_0\|_2$, and let the group proportions be $\alpha$ and $1-\alpha$.

\textbf{Initial preference geometry and success criteria:}
An initialization has \emph{common preference} when both groups initially prefer the same cluster. Formally, define
\begin{equation}
\mathcal C_{\rm common}
:=
\bigcup_{k\in\{0,1\}}
\left\{
W^0:
L_g(w_k^0)<L_g(w_{1-k}^0),\ \forall g\in\{0,1\}
\right\}.
\label{eq:common-preference}
\end{equation}
Otherwise, apart from ties, the two groups initially prefer different clusters. These labels describe only the first greedy assignment and do not determine whether later updates preserve it. Define the permutation invariant recovery error
\begin{equation}
R_T
:=
\min_{m\in\mathfrak S_2}
\max_{g\in\{0,1\}}
\left\|w_{m(g)}^T-\theta_g\right\|_2.
\label{eq:recovery-error}
\end{equation}
The system achieves $r$ recovery at round $T$ when $R_T\le r$. When $r<\Delta/2$, each group strictly prefers its matched model under terminal greedy routing. Thus recovery gives each group a distinct usable specialist rather than a single pooled compromise. In the balanced experimental setting, each matched model is also better for its group than the pooled parameter.

Define the certified specialization basin
\begin{equation}
\mathcal B_{\rm specialize}
:=
\bigcup_{m\in\mathfrak S_2}
\left\{
W^0=(w_0^0,w_1^0)\in\mathbb R^d\times\mathbb R^d:
\sum_{g=0}^{1}\left\|w_{m(g)}^0-\theta_g\right\|_2<\Delta
\right\}.
\label{eq:specialization-basin}
\end{equation}

\begin{theorem}[Greedy specialization basin]
\label{thm:specialist_basin}
Suppose $W^0\in\mathcal B_{\rm specialize}$ and $0<\eta\le1$. Let $m\in\mathfrak S_2$ witness membership in \eqref{eq:specialization-basin}. Then group $g$ greedily selects cluster $m(g)$ at every round, and
\[
w_{m(g)}^{t+1}=w_{m(g)}^t-\eta(w_{m(g)}^t-\theta_g),
\qquad
w_{m(g)}^t\longrightarrow\theta_g,
\qquad g\in\{0,1\}.
\]
\end{theorem}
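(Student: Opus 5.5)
Write $e_g^t:=\|w_{m(g)}^t-\theta_g\|_2$ for the error of group $g$ against its matched cluster. The plan is an induction on $t$ with invariant
\[
e_0^t+e_1^t<\Delta .
\]
At $t=0$ this is exactly membership in \eqref{eq:specialization-basin} with witness $m$.

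The first step is to show that the invariant forces greedy routing to send group $g$ to cluster $m(g)$. Fix $g$ and let $g'=1-g$. By the triangle inequality,
\[
\|w_{m(g')}^t-\theta_g\|_2\ \ge\ \|\theta_{g'}-\theta_g\|_2-\|w_{m(g')}^t-\theta_{g'}\|_2\ =\ \Delta-e_{g'}^t\ >\ e_g^t .
\]
The last inequality is the invariant. Since $L_g(w)=\tfrac12\|w-\theta_g\|_2^2$ is strictly increasing in the distance to $\theta_g$, this gives $L_g(w_{m(g)}^t)<L_g(w_{1-m(g)}^t)$. So every client of group $g$ strictly prefers $m(g)$, with no ties. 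Because $m$ is a bijection, cluster $m(g)$ receives exactly the clients of group $g$ and no others.

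The second step is to compute the update. A cluster trained only on group $g$ clients follows the population gradient step
\[
w_{m(g)}^{t+1}=w_{m(g)}^t-\eta\,\nabla L_g(w_{m(g)}^t)=w_{m(g)}^t-\eta(w_{m(g)}^t-\theta_g).
\]
The gradient is the same for every client in $I_g$, so any weighted aggregation of those clients gives this same step. Subtracting $\theta_g$ gives
\[
w_{m(g)}^{t+1}-\theta_g=(1-\eta)\bigl(w_{m(g)}^t-\theta_g\bigr),
\qquad\text{hence}\qquad
e_g^{t+1}=(1-\eta)\,e_g^t\le e_g^t,
\]
using $0<\eta\le1$. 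Summing over $g$ shows the invariant is preserved, which closes the induction. The recursion also gives $e_g^t=(1-\eta)^t e_g^0\to0$ (equal to $0$ from $t=1$ onward if $\eta=1$), so $w_{m(g)}^t\to\theta_g$.

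The main obstacle is the first step: the routing claim needs strict inequality, with no ties, so that the greedy assignment is unambiguous. This is exactly why the basin uses the strict condition $<\Delta$ and why we must verify that the strict inequality survives each update. The other point to fix explicitly is the update model: we take it to be one population gradient step per round, aggregated over the clients assigned to each cluster. A cluster that receives no clients is left unchanged, but under the bijection above no cluster is starved.
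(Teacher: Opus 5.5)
Your proposal is correct and follows essentially the same route as the paper. Both arguments induct on separated greedy routing, using the reverse triangle inequality to get strict preference for the matched cluster and the contraction $w_{m(g)}^{t+1}-\theta_g=(1-\eta)(w_{m(g)}^t-\theta_g)$ to preserve it; the only bookkeeping difference is that you carry the invariant $e_0^t+e_1^t<\Delta$ forward, whereas the paper bounds each matched error by its initial value $r_g=\|w_{m(g)}^0-\theta_g\|_2$ and reuses $r_0+r_1<\Delta$.
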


\textit{Proof:} See Appendix~\ref{app:proof_greedy_specialization}.

The basin condition certifies a favorable initialization in which the groups initially prefer different models and the induced updates preserve that separation. We next identify an initialization region with the opposite behavior. Let $\bar\theta_\alpha=\alpha\theta_0+(1-\alpha)\theta_1$. For $j\in\{0,1\}$, define the basin in which cluster $j$ is starved as
\begin{equation}
\begin{aligned}
\mathcal B_{{\rm starve},j}
&:=
\left\{
W^0=(w_0^0,w_1^0)\in\mathbb R^d\times\mathbb R^d:
L_g(w_j^0)>
\max\!\left\{
L_g(w_{1-j}^0),L_g(\bar\theta_\alpha)
\right\},
\ \forall g\in\{0,1\}
\right\},
\\
\mathcal B_{\rm starve}
&:=
\mathcal B_{{\rm starve},0}\cup\mathcal B_{{\rm starve},1}.
\end{aligned}
\label{eq:starvation-basin}
\end{equation}
Within either component, both groups select the same active cluster, the resulting updates preserve that common preference, and the other cluster remains permanently starved.

\begin{theorem}[Greedy starvation basin]
\label{thm:starvation_basin}
Let $0<\eta\le1$. If $W^0\in\mathcal B_{{\rm starve},j}$, then both groups route to cluster $1-j$ at every round, $w_j^t=w_j^0$, and $w_{1-j}^{t+1}=w_{1-j}^t-\eta(w_{1-j}^t-\bar\theta_\alpha)$. Consequently, $w_{1-j}^t\to\bar\theta_\alpha$ while cluster $j$ remains permanently starved.
\end{theorem}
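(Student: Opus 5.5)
We argue by induction on $t$, with the inductive hypothesis that both groups strictly prefer cluster $1-j$ at round $t$ and that
\[
L_g(w_j^t) > L_g(w_{1-j}^t)
\quad\text{for } g\in\{0,1\}.
\]
Under this hypothesis every client routes to cluster $1-j$. Cluster $j$ therefore receives no update, so $w_j^{t+1}=w_j^t=w_j^0$.

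Cluster $1-j$ aggregates the gradients of both groups with weights $\alpha$ and $1-\alpha$. Since $\nabla L_g(w)=w-\theta_g$, the aggregated gradient is $w-\bar\theta_\alpha$. This gives the stated recursion $w_{1-j}^{t+1}=w_{1-j}^t-\eta(w_{1-j}^t-\bar\theta_\alpha)$. (I assume, as in Theorem~\ref{thm:specialist_basin}, that the server update is the population-weighted average of client gradient steps.) The base case $t=0$ is exactly the membership condition $L_g(w_j^0)>L_g(w_{1-j}^0)$ in \eqref{eq:starvation-basin}.

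The key step is preserving the strict preference. Write $w^t:=w_{1-j}^t$. The recursion gives $w^{t+1}=(1-\eta)w^t+\eta\bar\theta_\alpha$, a convex combination because $0<\eta\le1$. Each $L_g$ is convex, so
\[
L_g(w^{t+1})\le (1-\eta)L_g(w^t)+\eta L_g(\bar\theta_\alpha)\le \max\{L_g(w^t),L_g(\bar\theta_\alpha)\}.
\]
Iterating from $t=0$ yields $L_g(w^t)\le \max\{L_g(w^0),L_g(\bar\theta_\alpha)\}$ for all $t$. The definition of $\mathcal B_{{\rm starve},j}$ says precisely that this maximum is strictly less than $L_g(w_j^0)=L_g(w_j^t)$. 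This closes the induction for both groups simultaneously. The extra requirement $L_g(w_j^0)>L_g(\bar\theta_\alpha)$ in the basin definition is what makes this work: it controls where the active model is heading, not just where it starts.

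Convergence follows from the closed form $w^t-\bar\theta_\alpha=(1-\eta)^t(w^0-\bar\theta_\alpha)$. This tends to zero when $0<\eta<1$. When $\eta=1$, $w^t=\bar\theta_\alpha$ for all $t\ge1$.

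The main obstacle I anticipate is that the active model's loss for an individual group is not monotone along the trajectory: moving toward the pooled solution $\bar\theta_\alpha$ can increase $L_g$. A naive monotonicity argument therefore fails. Convexity handles this by bounding the loss along the segment from $w^0$ to $\bar\theta_\alpha$ by its endpoint values. I would also state a tie-breaking convention, though strict inequalities make ties impossible here.
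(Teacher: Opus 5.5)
Your proof is correct and follows essentially the same route as the paper: induction on the common preference, with convexity of $L_g$ bounding the active model's loss by $\max\{L_g(w_{1-j}^0),L_g(\bar\theta_\alpha)\}<L_g(w_j^0)=L_g(w_j^t)$. The only cosmetic difference is that the paper uses the closed form $w_{1-j}^t=\bar\theta_\alpha+(1-\eta)^t(w_{1-j}^0-\bar\theta_\alpha)$ to place the iterate on the segment from $w_{1-j}^0$ to $\bar\theta_\alpha$ and applies convexity once, whereas you apply the one-step convexity bound and iterate it.
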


\textit{Proof:} See Appendix~\ref{app:proof_greedy_starvation}.

\begin{figure}[htbp]
    \centering
    \includegraphics[width=0.78\linewidth]{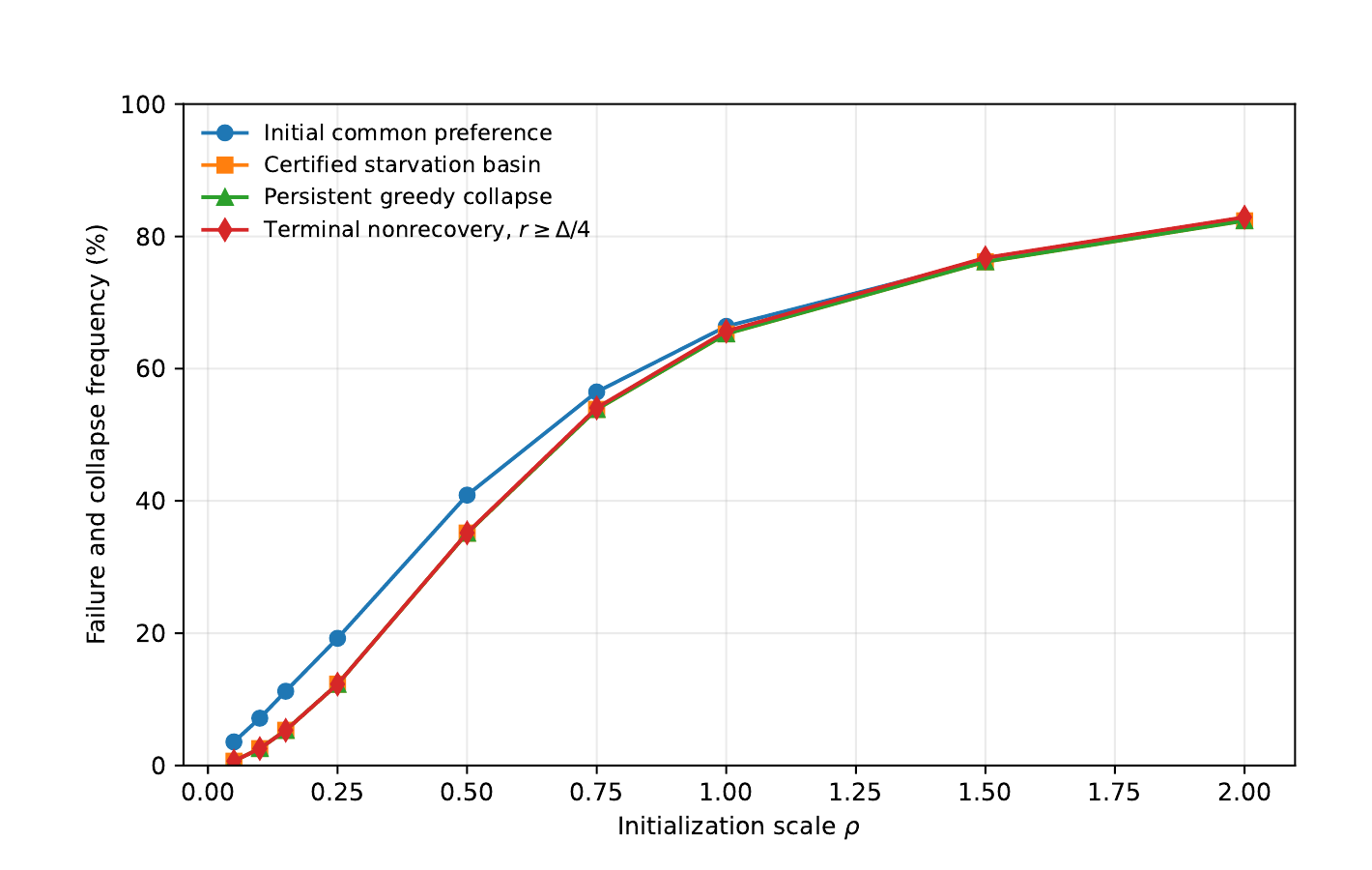}
    \caption{Empirical frequency of initial common preference, membership in the certified starvation basin $\mathcal B_{\rm starve}$, persistent greedy collapse, and terminal nonrecovery as functions of the initialization spread $\rho$.}
    \label{fig:starvation-frequency}
\end{figure}

To compare the theorem with the realized dynamics, we sample $20{,}000$ initialization pairs per value of $\rho$ from $w_k^0=\bar\theta+\rho z_k$, where $z_k\sim\mathcal N(0,I)$ and $\bar\theta=(\theta_0+\theta_1)/2$. In Figure~\ref{fig:starvation-frequency}, initial common preference is the event $W^0\in\mathcal C_{\rm common}$ from \eqref{eq:common-preference}. It records only whether the first assignments coincide. The certified starvation curve is the frequency of $W^0\in\mathcal B_{\rm starve}$ and therefore counts initializations for which Theorem~\ref{thm:starvation_basin} guarantees permanent starvation. Persistent greedy collapse means that greedy routing uses only one cluster throughout the $200$ simulated rounds. Terminal nonrecovery is the event $R_{200}\ge\Delta/4$.

All four frequencies increase with $\rho$ and exceed $80$ percent at the largest tested spread. Common preference is more frequent than the other events at smaller $\rho$, showing that a shared first choice alone does not certify failure. Every sampled initialization in $\mathcal B_{\rm starve}$ remains persistently collapsed, while some common preference initializations outside the certified basin escape as training changes the loss ordering. Persistent collapse and terminal nonrecovery closely follow the certified basin frequency. Thus Figure~\ref{fig:starvation-frequency} shows both that the sufficient basin has substantial probability under pooled centered Gaussian initialization and that its deterministic conclusion accurately predicts the simulated greedy trajectories.

\textbf{Routing controlled dynamics:}
The starvation theorem describes the trajectory induced by greedy routing. To characterize what an intervention must change, let $a_i^t\in\{1,\ldots,K\}$ be client $i$'s sampled cluster and let $q_i^t=e_{a_i^t}$ be its one hot executed route. Write $N_g=\sum_{i\in I_g}n_i$, $N=\sum_i n_i$, and $\alpha_g=N_g/N$. Define
\begin{equation}
M_{gk}^t
=\frac1{N_g}\sum_{i\in I_g}n_iq_{ik}^t
=\frac1{N_g}\sum_{i\in I_g}n_i\mathbf1\{a_i^t=k\},
\label{eq:routing-mass}
\end{equation}
\begin{equation}
\begin{aligned}
s_k^t
&=\mathbf1\!\left\{\sum_h\alpha_hM_{hk}^t>0\right\}
=\mathbf1\!\left\{\sum_i n_i\mathbf1\{a_i^t=k\}>0\right\},\\
\beta_{gk}^t
&=\frac{\alpha_gM_{gk}^t}{\sum_h\alpha_hM_{hk}^t}
=\frac{\sum_{i\in I_g}n_i\mathbf1\{a_i^t=k\}}
{\sum_i n_i\mathbf1\{a_i^t=k\}}.
\end{aligned}
\label{eq:routing-summary}
\end{equation}
The second equality in each line makes the hard routing and sample count weighting explicit. Here $M_{gk}^t$ is the fraction of group $g$'s sample mass routed to cluster $k$, $s_k^t$ indicates whether cluster $k$ receives any client update, and $\beta_{gk}^t$ is the sample weighted contribution of group $g$ to that cluster's update when $s_k^t=1$. The routing induced server update remains
\begin{equation}
w_k^{t+1}
=
\begin{cases}
w_k^t-\eta\sum_g\beta_{gk}^t\nabla L_g(w_k^t), & s_k^t=1,\\
w_k^t, & s_k^t=0.
\end{cases}
\label{eq:routing_controlled_dynamics}
\end{equation}
If no client samples cluster $k$, then $M_{gk}^t=0$ for every group, $s_k^t=0$, and the cluster remains frozen. A routing intervention can alter this trajectory by giving an otherwise unused cluster positive sampling probability and thereby producing executed updates for it.

\textbf{Matched error dynamics:}
Fix a matching $m$ between groups and clusters, and define the matched error $e_g^t=w_{m(g)}^t-\theta_g$, activity $a_g^t=s_{m(g)}^t$, and incompatible update contribution $c_g^t=\sum_{h\ne g}\beta_{h,m(g)}^t(\theta_h-\theta_g)$. Substituting these in Equation~\eqref{eq:routing_controlled_dynamics} gives
\begin{equation}
e_g^{t+1}
=
\begin{cases}
(1-\eta)e_g^t+\eta c_g^t, & a_g^t=1,\\
e_g^t, & a_g^t=0.
\end{cases}
\label{eq:matched_error_dynamics}
\end{equation}
\textbf{Persistent selective exposure:} When the matched cluster is inactive, its error cannot contract. When it is active, its error contracts toward the group specialist, while incompatible updates displace it through $c_g^t$. Recovery therefore requires repeated training of the matched cluster while its update composition remains dominated by the group it is intended to serve. We call this combination \emph{persistent selective exposure}. For $E_g(t,T):=\sum_{\tau=t}^{T-1}a_g^\tau$, iterating Equation~\eqref{eq:matched_error_dynamics} gives the following condition.

\begin{theorem}[Finite horizon recovery under selective exposure]
\label{thm:selective_exposure}
Fix a matching $m$, let $0<\eta\le1$, and consider any realized routing sequence through round $T$. If, for each $g\in\{0,1\}$,
\begin{equation}
(1-\eta)^{E_g(0,T)}\|e_g^0\|_2
+\eta\sum_{t=0}^{T-1}a_g^t(1-\eta)^{E_g(t+1,T)}\|c_g^t\|_2\le r,
\label{eq:selective_exposure_condition}
\end{equation}
then $\|w_{m(g)}^T-\theta_g\|_2\le r$. If $r<\Delta/2$, terminal greedy routing strictly separates the two groups. The proof and excess risk conclusion appear in Appendix~\ref{app:proof_finite_horizon_recovery}.
\end{theorem}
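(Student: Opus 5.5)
The plan is to unroll the matched error recursion \eqref{eq:matched_error_dynamics} backward from round $T$ and then bound it with the triangle inequality.

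\textbf{Unified recursion.} First I would record the two cases of \eqref{eq:matched_error_dynamics} as a single affine recursion,
\[
e_g^{t+1}=(1-\eta)^{a_g^t}e_g^t+\eta a_g^t c_g^t .
\]
This holds because $a_g^t\in\{0,1\}$. Before using it, I would verify the recursion itself from \eqref{eq:routing_controlled_dynamics}. When $s_{m(g)}^t=1$, the weights $\beta_{h,m(g)}^t$ sum to one. Hence
\[
\sum_h\beta_{h,m(g)}^t\bigl(w_{m(g)}^t-\theta_h\bigr)=e_g^t-\sum_{h\ne g}\beta_{h,m(g)}^t(\theta_h-\theta_g)=e_g^t-c_g^t .
\]
Multiplying by $\eta$ and subtracting from $w_{m(g)}^t$ gives $(1-\eta)e_g^t+\eta c_g^t$.

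\textbf{Unrolling.} Next I would prove by induction on $T$ that
\[
e_g^T=\Bigl(\prod_{\tau=0}^{T-1}(1-\eta)^{a_g^\tau}\Bigr)e_g^0+\eta\sum_{t=0}^{T-1}a_g^t\Bigl(\prod_{\tau=t+1}^{T-1}(1-\eta)^{a_g^\tau}\Bigr)c_g^t .
\]
The products equal $(1-\eta)^{E_g(0,T)}$ and $(1-\eta)^{E_g(t+1,T)}$ respectively. The induction step multiplies the formula at $T$ by $(1-\eta)^{a_g^T}$ and adds $\eta a_g^Tc_g^T$; this is also consistent with $E_g(T+1,T+1)=0$.

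\textbf{Norm bound.} Since $0<\eta\le1$, every coefficient is nonnegative. The triangle inequality then gives $\|e_g^T\|_2$ bounded by the left side of \eqref{eq:selective_exposure_condition}, and so $\|w_{m(g)}^T-\theta_g\|_2\le r$. The case $\eta=1$ needs a remark: I would adopt the convention $0^0=1$, which is consistent with the recursion.

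\textbf{Separation claim.} Assume $r<\Delta/2$ and the bound holds for both $g$. Then for $h\ne g$,
\[
\|w_{m(h)}^T-\theta_g\|_2\ge\Delta-\|w_{m(h)}^T-\theta_h\|_2\ge\Delta-r>r\ge\|w_{m(g)}^T-\theta_g\|_2 .
\]
Since $L_g(w)=\tfrac12\|w-\theta_g\|_2^2$, group $g$ strictly prefers $m(g)$. Because $m$ is a bijection, the two groups route to distinct clusters.

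\textbf{Main obstacle.} Most of this is bookkeeping. The step most likely to hide an error is the unrolling: the exponents must count only active rounds strictly after $t$, and inactive rounds must leave the error unchanged. I would check this carefully with the unified recursion.
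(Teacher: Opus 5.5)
Your proposal is correct and follows essentially the same argument as the paper. Like the paper, you unroll the matched error recursion into the exact finite-horizon response, bound it with the triangle inequality, and use the reverse triangle inequality with $\Delta-r>r$ for terminal separation. Your derivation of the recursion from the $\beta$-weighted update and the $0^0=1$ convention at $\eta=1$ are details the paper leaves implicit, and the excess-risk remark follows at once from $L_g(w)-L_g(\theta_g)=\tfrac12\|w-\theta_g\|_2^2$.
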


\section{Adaptive routing control}
\label{sec:routing_control}

Theorem~\ref{thm:selective_exposure} identifies three properties that routing must produce. \emph{Exposure} activates a matched cluster, making $a_g^t=1$ so its error can contract. \emph{Persistence} accumulates enough active rounds in $E_g(0,T)$ to reduce the initial error term in \eqref{eq:selective_exposure_condition}. \emph{Selectivity} keeps $\|c_g^t\|_2$ small by directing the cluster's updates primarily from its compatible group. At the controller level, exploration creates exposure, exploitation of a useful lowest loss model promotes selectivity, and hysteresis makes an intervention persistent.

A routing controller $\phi_r$ maps client $i$'s observable history $H_i^t$ to a distribution over cluster routes:
\begin{equation}
\pi_i^t=\phi_r(H_i^t)\in\Delta_K.
\label{eq:controller-map}
\end{equation}
Let $k_i^{(1)}(t)$ and $k_i^{(2)}(t)$ denote client $i$'s lowest and second lowest loss clusters. The Greedy controller is
\begin{equation}
\phi_{\rm gr}(H_i^t)
=
e_{k_i^{(1)}(t)}.
\label{eq:greedy-controller}
\end{equation}
It exploits the current loss ordering by routing only to the lowest loss model. Once distinct useful models have formed, this promotes selective group specific updates. Under collapsed initialization, however, it gives the unused model no exposure and leaves its activity equal to zero.

The Rank Hedge controller is
\begin{equation}
\phi_{\rm rh}(H_i^t)
=
0.8e_{k_i^{(1)}(t)}+0.2e_{k_i^{(2)}(t)}.
\label{eq:rank-hedge-controller}
\end{equation}
Its second ranked weight explores an alternative cluster and can activate a model that Greedy starves. This exposure does not by itself ensure selectivity: if incompatible clients are redirected to the same model, their contributions increase $\|c_g^t\|_2$ in Theorem~\ref{thm:selective_exposure} and can oppose recovery.

The controller distribution is not used as a soft aggregation weight. Client $i$ samples one cluster and sends a one hot executed route,
\begin{equation}
a_i^t\sim\pi_i^t,
\qquad
q_i^t=e_{a_i^t},
\label{eq:hard-sampled-route}
\end{equation}
then trains only $w_{a_i^t}^t$. The feedback assigned to the selected controller arm is the validation loss expected over this hard cluster draw,
\begin{equation}
z_i^t
=\sum_{k=1}^K\pi_{ik}^t\widehat L_i(w_k^t).
\label{eq:expected-hard-feedback}
\end{equation}
Thus controller feedback averages the losses of possible hard routes, whereas the server sees only the sampled route $q_i^t$ and the corresponding local model update.

For $S_{t,k}=\{i:a_i^t=k\}$ and $N_{t,k}=\sum_{i\in S_{t,k}}n_i$, the server performs hard sample weighted aggregation:
\begin{equation}
w_k^{t+1}
=
\begin{cases}
\displaystyle\frac1{N_{t,k}}\sum_{i\in S_{t,k}}n_i\widetilde w_i^t,
&N_{t,k}>0,\\[0.6em]
w_k^t,&N_{t,k}=0.
\end{cases}
\label{eq:hard-server-aggregation}
\end{equation}
This is the empirical counterpart of \eqref{eq:routing_controlled_dynamics}: the sampled one hot routes determine $M_{gk}^t$, $s_k^t$, and $\beta_{gk}^t$, while $\pi_i^t$ never enters the server average.

Because the useful balance between exposure and selectivity depends on the unknown routing geometry and changes with the cluster models, each client independently runs EXP3 over a small controller menu \citep{doi:10.1137/S0097539701398375}. Outside FEMNIST, $\Pi=\{\phi_{\rm gr},\phi_{\rm rh}\}$. At block $b$, client $i$ forms $p_i^b$, samples $R_i^b\sim p_i^b$, holds that controller throughout the block, sets $\ell_i^b=|\mathcal T_b|^{-1}\sum_{t\in\mathcal T_b}z_i^t$, and updates
\begin{equation}
\begin{aligned}
p_{ir}^b
&=(1-\gamma)\frac{\omega_{ir}^b}{\sum_s\omega_{is}^b}
+\frac{\gamma}{|\Pi|},
\\
\omega_{ir}^{b+1}
&=\omega_{ir}^b
\exp\!\left(
-\eta_{\rm pol}
\frac{\mathbf 1\{R_i^b=r\}\ell_i^b}{p_{ir}^b}
\right).
\end{aligned}
\label{eq:blocked-exp3}
\end{equation}
Blocking gives a selected intervention time to alter the shared models before its feedback is evaluated. The exploratory component of \eqref{eq:blocked-exp3} maintains opportunities for exposure, while learned controller weights can shift probability toward Greedy when the recovered models make selective exploitation effective. Since controller choices alter future model states and losses, ordinary EXP3 regret does not guarantee recovery \citep{arora2012onlinebanditlearningadaptive}. Theorem~\ref{thm:selective_exposure} instead evaluates whether the realized routing sequence supplies sufficient exposure and persistence while controlling incompatible forcing.

Algorithm~\ref{alg:blocked_controller_selection} gives the complete training order. At the start of each block, every client samples one controller arm and retains it throughout the block. In each communication round, the server broadcasts the current cluster models, each client samples one hard cluster route and trains only that model, and the server aggregates the returned models by sampled cluster using client sample counts. At the end of the block, each client updates EXP3 using its average expected hard route feedback.

\begin{routedalgorithm}[H]
\small
\caption{Blocked decentralized routing control.}
\label{alg:blocked_controller_selection}
\vspace{-0.4em}
\hrule
\vspace{0.25em}
\DontPrintSemicolon
\SetAlgoLined
\textbf{Server executes:} initialize $W^0$ and $\omega_{ir}^0\gets1$ for every client $i$ and controller $r\in\Pi$;
\For{blocks $b=0,1,\ldots$}{
  \ForEach{client $i$ in parallel}{
    compute $p_i^b$ from \eqref{eq:blocked-exp3}; sample $R_i^b\sim p_i^b$;
  }
  \For{$t\in\mathcal T_b$}{
    broadcast $W^t$;
    \ForEach{client $i$ in parallel}{
      $(a_i^t,q_i^t,n_i,\widetilde w_i^t,z_i^t)\gets\textsc{ClientUpdate}(i,W^t,R_i^b,H_i^t)$;
    }
    \ForEach{cluster $k$}{
      $S_{t,k}\gets\{i:a_i^t=k\}$ and $N_{t,k}\gets\sum_{i\in S_{t,k}}n_i$;
      update $w_k^{t+1}$ by \eqref{eq:hard-server-aggregation};
    }
  }
  \ForEach{client $i$ in parallel}{
    $\ell_i^b\gets|\mathcal T_b|^{-1}\sum_{t\in\mathcal T_b}z_i^t$;
    update $\{\omega_{ir}^{b+1}:r\in\Pi\}$ by \eqref{eq:blocked-exp3};
  }
}
\textsc{ClientUpdate}$(i,W^t,R_i^b,H_i^t)$:
$\pi_i^t\gets\phi_{R_i^b}(H_i^t)$; sample $a_i^t\sim\pi_i^t$ and set $q_i^t\gets e_{a_i^t}$;
$\widetilde w_i^t\gets\operatorname{LocalUpdate}_i(w_{a_i^t}^t)$;
$z_i^t\gets\sum_{k=1}^K\pi_{ik}^t\widehat L_i(w_k^t)$;
\Return $(a_i^t,q_i^t,n_i,\widetilde w_i^t,z_i^t)$;
\vspace{0.15em}
\hrule
\vspace{0.3em}
\end{routedalgorithm}

With a slowly trained neural model, isolated exploration may create activity without providing enough persistent compatible updates for the model to become competitive. The selective hysteresis controller therefore treats an entered target $c_i$ as a temporary commitment. Let $d_i^t$ count client $i$'s routed updates to $c_i$, and define the target's excess loss
\begin{equation}
G_i^t
:=
\widehat L_i(w_{c_i}^t)
-
\min_j\widehat L_i(w_j^t).
\label{eq:hysteresis-gap}
\end{equation}
The controller is
\begin{equation}
\phi_{\rm hys}(H_i^t)
=
\begin{cases}
e_{c_i}, & d_i^t<H_{\min},\\
e_{c_i}, & H_{\min}\le d_i^t<H_{\max}\ \text{and}\ G_i^t>\epsilon_{\rm exit},\\
e_{k_i^{(1)}(t)}, & \text{otherwise}.
\end{cases}
\label{eq:hysteresis-controller}
\end{equation}
Thus an entered client remains on the target for at least $H_{\min}$ updates and subsequently continues while the target's excess loss exceeds $\epsilon_{\rm exit}$, subject to the episode-specific cap $H_{\max}$ \citep{HESPANHA2003263}. The counter records completed target updates before the current routing decision. When the competitiveness or cap condition releases an episode, the client executes its Greedy route in that same round and becomes eligible for a new controller decision in the following round. The cap applies separately to every hysteresis episode, including episodes entered after EXP3 selection. Hysteresis supplies persistence, but does not itself ensure compatible update composition.

We address that entry problem in FEMNIST with one synchronized round. Each client first reports a provisional Greedy route $\widetilde a_i^0$, which is used only to measure the sample weighted provisional mass
\begin{equation}
\widetilde\mu_k^0
:=
\frac1N\sum_i n_i\mathbf1\{\widetilde a_i^0=k\}.
\label{eq:provisional-mass}
\end{equation}
After the server broadcasts these masses, client $i$ tests a candidate underused cluster $k$ by making a temporary local update $U_i(w_k^0)$ and computing
\begin{equation}
B_{ik}^0
:=
\min_j\widehat L_i(w_j^0)
-\widehat L_i\!\left(U_i(w_k^0)\right).
\label{eq:entry-benefit}
\end{equation}
The provisional routes are never aggregated. A client executes the candidate route only when $\widetilde\mu_k^0<\mu_{\rm rev}$ and $B_{ik}^0>\tau$. Qualifying clients therefore enter the weak model in the same round, creating coordinated exposure from clients whose local tests indicate compatibility.

Algorithm~\ref{alg:femnist-coordinated-entry} mirrors Algorithm~\ref{alg:blocked_controller_selection}, but inserts one synchronized entry round. Provisional Greedy reports determine only the broadcast masses; \textsc{EntryUpdate} either executes a qualifying underused cluster route and enters forced hysteresis or executes the provisional Greedy route. The server aggregates these executed round zero updates using the same hard update as Algorithm~\ref{alg:blocked_controller_selection}. Forced entrants make no EXP3 weight updates during their entry episode. Upon release they execute Greedy, then become eligible for independent selection over $\Pi_{\rm FEM}=\{\phi_{\rm gr},\phi_{\rm hys}\}$ in the following round. FEMNIST uses client-specific actions rather than shared fixed blocks: a Greedy action lasts one round, whereas an entered hysteresis action persists until release. EXP3 receives the mean normalized training-loss feedback of each completed selected action, as specified in Appendix~\ref{app:femnist-supplement}. Thus entry supplies exposure and initial selectivity, while hysteresis supplies persistence. Further settings and interpretation appear in Appendix~\ref{app:femnist-supplement}.

\begin{routedalgorithm}[H]
\footnotesize
\caption{Synchronized entry followed by decentralized routing control.}
\label{alg:femnist-coordinated-entry}
\vspace{-0.4em}
\hrule
\vspace{0.25em}
\DontPrintSemicolon
\SetAlgoLined
\textbf{Server executes synchronized entry:}
\ForEach{client $i$ in parallel}{
  report $\widetilde a_i^0\in\arg\min_j\widehat L_i(w_j^0)$;
}
compute \eqref{eq:provisional-mass} and broadcast $\widetilde\mu^0$; these reports do not update any model;
\ForEach{client $i$ in parallel}{
  $(a_i^0,q_i^0,n_i,\widetilde w_i^0,F_i,c_i,d_i)\gets\textsc{EntryUpdate}(i,W^0,\widetilde a_i^0,\widetilde\mu^0)$;
}
\ForEach{cluster $k$}{
  $S_{0,k}\gets\{i:a_i^0=k\}$ and $N_{0,k}\gets\sum_{i\in S_{0,k}}n_i$;
  update $w_k^1$ by \eqref{eq:hard-server-aggregation};
}
\textsc{EntryUpdate}$(i,W^0,\widetilde a_i^0,\widetilde\mu^0)$:\par
\begingroup\leftskip=1em
$\mathcal C_i^0\gets\{k:\widetilde\mu_k^0<\mu_{\rm rev}\}$;\par
\textbf{if} $\mathcal C_i^0\ne\varnothing$, compute \eqref{eq:entry-benefit} and choose $c_i^0\in\arg\max_{k\in\mathcal C_i^0}B_{ik}^0$;\par
\textbf{if} $\mathcal C_i^0\ne\varnothing$ and $B_{i,c_i^0}^0>\tau$ \textbf{then}\par
\begingroup\leftskip=1em
$a_i^0\gets c_i^0$; $\widetilde w_i^0\gets U_i(w_{c_i^0}^0)$; $(F_i,c_i,d_i)\gets(1,c_i^0,1)$;\par
\endgroup
\textbf{else}\par
\begingroup\leftskip=1em
$a_i^0\gets\widetilde a_i^0$; $\widetilde w_i^0\gets\operatorname{LocalUpdate}_i(w_{a_i^0}^0)$; $F_i\gets0$;\par
\endgroup
$q_i^0\gets e_{a_i^0}$; \Return $(a_i^0,q_i^0,n_i,\widetilde w_i^0,F_i,c_i,d_i)$;\par
\endgroup

\textbf{Server executes subsequent rounds:}
\For{rounds $t=1,\ldots,T-1$}{
  broadcast $W^t$;\par
  \ForEach{client $i$ in parallel}{
    if an episode is active, apply \eqref{eq:hysteresis-controller}; on release, execute Greedy in this round, close the episode, and set $F_i\gets0$;\par
    otherwise, sample a new arm from $\Pi_{\rm FEM}$ using \eqref{eq:blocked-exp3}, with a client-specific action index;\par
    execute the selected Greedy route, hysteresis fallback, or committed route, and set $q_i^t\gets e_{a_i^t}$;\par
    $\widetilde w_i^t\gets\operatorname{LocalUpdate}_i(w_{a_i^t}^t)$ and compute $z_{i,\rm FEM}^t$ by \eqref{eq:femnist-normalized-feedback};\par
    accumulate feedback only for rounds belonging to the selected action; the Greedy exit round is excluded from the released episode;\par
    when an EXP3-selected action completes, update its weights by \eqref{eq:blocked-exp3} using its mean accumulated cost and its sampling probability; do not update EXP3 for the forced entry episode;\par
    an episode released this round is eligible for a new controller decision only at round $t+1$;\par
  }
  \ForEach{cluster $k$}{
    $S_{t,k}\gets\{i:a_i^t=k\}$ and $N_{t,k}\gets\sum_{i\in S_{t,k}}n_i$;
    update $w_k^{t+1}$ by \eqref{eq:hard-server-aggregation};
  }
}
\vspace{0.15em}
\hrule
\vspace{0.3em}
\end{routedalgorithm}

\section{Experiments}
\label{sec:experiments}
We evaluate decentralized online controller selection against Greedy IFCA and FedCAM on Linear regression and transformed FEMNIST; multiclass GMM and California Housing appear in Appendices~\ref{app:gmm-supplement} and~\ref{app:housing-supplement}. Because these predictors have different parameter spaces, their raw initialization scales are not comparable. We instead summarize the induced routing geometry by the round zero attraction $\mathbf A_{gk}^0:=|I_g|^{-1}\sum_{i\in I_g}\mathbf1\{k=\arg\min_j\widehat L_i(w_j^0)\}$. A collapsed entrance sends several groups to the same cluster and leaves another unused; we test whether adaptive routing can transform such an entrance into models that remain separated under controller free greedy evaluation.

\paragraph{Code and reproducibility.}
The complete code, tests, compact evidence, plotting scripts, and experiment runners used to reproduce the results and figures are available at \url{https://github.com/prsinha1/FedRecover}. The experiments reported in this paper correspond to release \texttt{v1.0.1}. The complete reproducibility package is the asset \path{FedRecover_reproducibility_v1.0.1.zip} attached to \href{https://github.com/prsinha1/FedRecover/releases/tag/v1.0.1}{release \texttt{v1.0.1}}. GitHub's automatically generated ``Source code'' archives should not be used.

\paragraph{Linear Regression:}
\label{sec:exp-linear}
We use $80$ clients balanced across the two linear groups of Section~\ref{sec:routing_recovery} and initialize $w_k^0=\bar\theta+\rho z_k$, with $z_k\sim\mathcal N(0,I)$, so $\rho$ controls spread around the pooled parameter. The audited common preference frequency increases from $0.10$ to $1.00$ across the tested $\rho$ sweep. Using the recovery error $R_T$ from \eqref{eq:recovery-error}, Figure~\ref{fig:main-results}(a) reports the recovery rate, $\widehat{\Pr}(R_T<0.5)$. Greedy IFCA and FedCAM become less reliable as $\rho$ increases, while EXP3 recovers every evaluated Greedy failure cell; the dashed curve is the certificate based upper bound on Greedy recovery.

\begin{figure}[H]
    \centering
    \begin{minipage}[c]{0.235\textwidth}\centering
        \includegraphics[width=\linewidth]{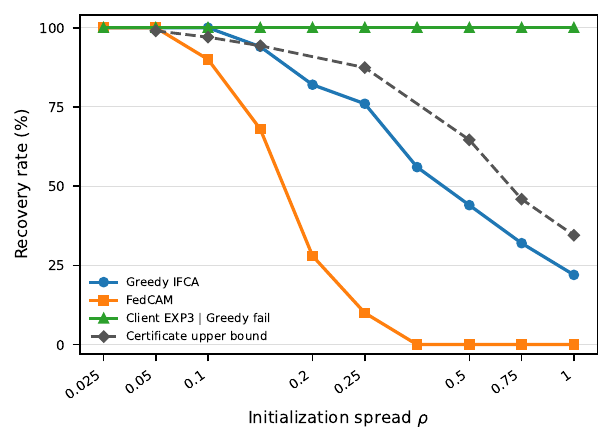}
        {\small\textbf{(a) Recovery sweep}}
    \end{minipage}\hfill
    \begin{minipage}[c]{0.235\textwidth}\centering
        \includegraphics[width=\linewidth]{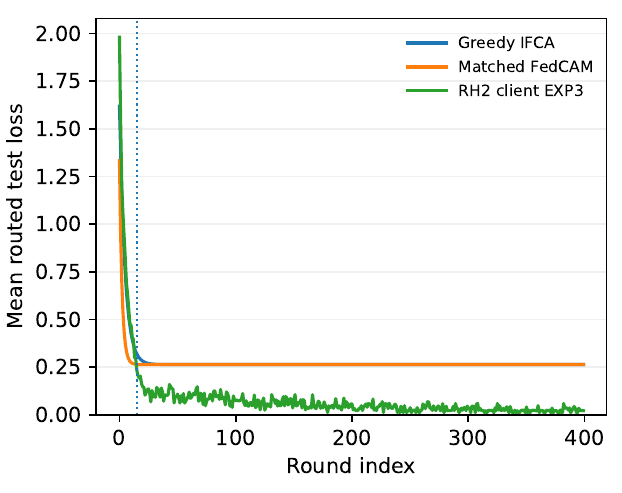}
        {\small\textbf{(b) Routed test loss}}
    \end{minipage}\hfill
    \begin{minipage}[c]{0.235\textwidth}\centering
        \includegraphics[width=\linewidth]{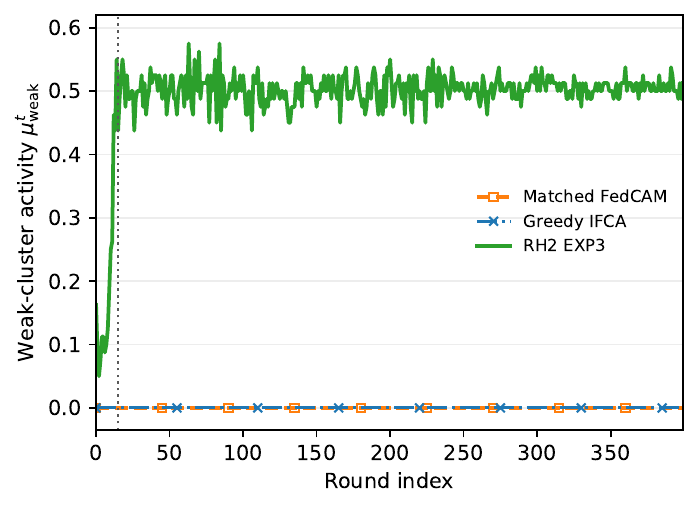}
        {\small\textbf{(c) Weak cluster activity}}
    \end{minipage}\hfill
    \begin{minipage}[c]{0.235\textwidth}\centering
        \includegraphics[width=\linewidth]{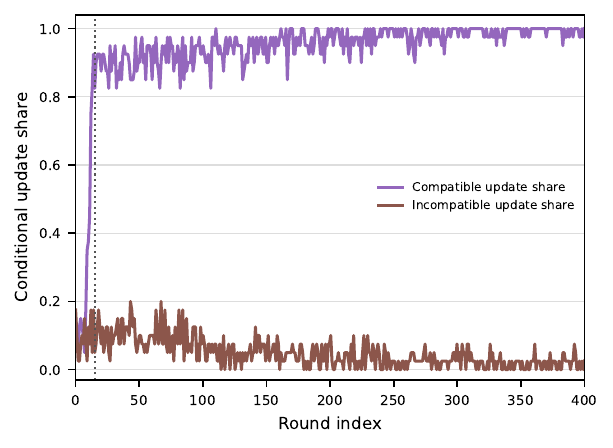}
        {\small\textbf{(d) Update composition}}
    \end{minipage}
    \par\smallskip
    \begin{minipage}[c]{0.235\textwidth}\centering
        \includegraphics[width=\linewidth]{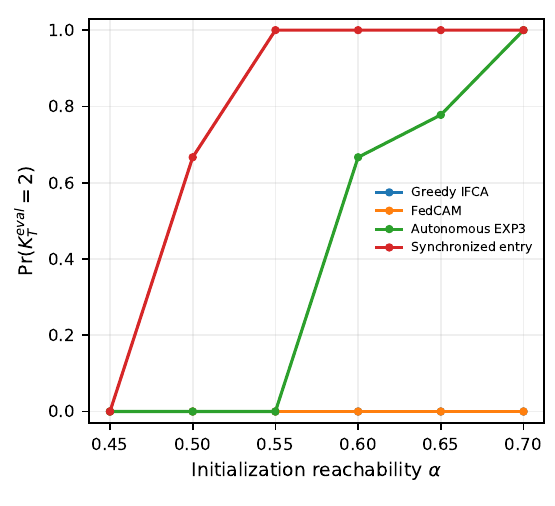}
        {\small\textbf{(e) Usability across $\alpha$}}
    \end{minipage}\hfill
    \begin{minipage}[c]{0.235\textwidth}\centering
        \includegraphics[width=\linewidth]{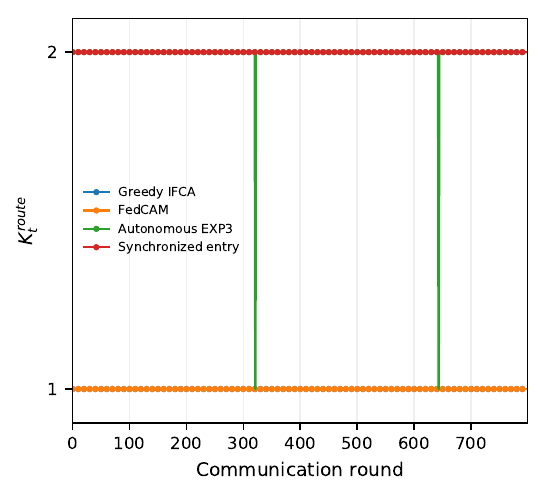}
        {\small\textbf{(f) Executed activity}}
    \end{minipage}\hfill
    \begin{minipage}[c]{0.235\textwidth}\centering
        \includegraphics[width=\linewidth]{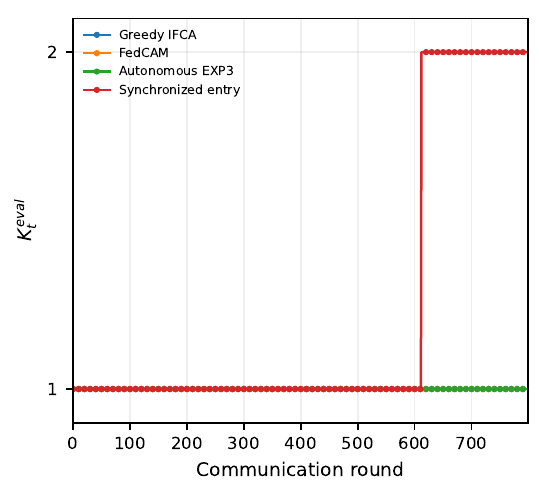}
        {\small\textbf{(g) Model usability}}
    \end{minipage}\hfill
    \begin{minipage}[c]{0.235\textwidth}\centering
        \includegraphics[width=\linewidth]{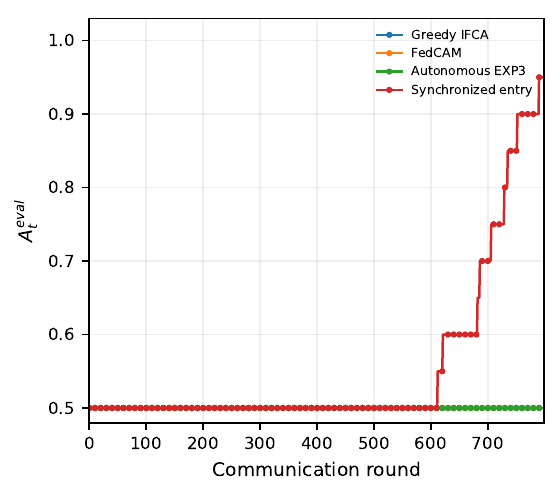}
        {\small\textbf{(h) Specialization}}
    \end{minipage}
    \caption{\textbf{Recovery from collapsed initialization} in linear (a:d) and FEMNIST (e:h) settings.}
    \label{fig:main-results}
    \label{fig:linear-main}
    \label{fig:femnist-main}
\end{figure}

On a representative failed start, panel (b) plots routed test loss $L_t^{\rm route}:=n^{-1}\sum_iL_{i,{\rm test}}(w_{r_i^t}^t)$. Panel (c) shows that only EXP3 restores weak cluster mass $\mu_{\rm weak}^t:=\sum_g\alpha_gM_{g,{\rm weak}}^t$, and hence weak cluster activity $s_{\rm weak}^t=\mathbf1\{\mu_{\rm weak}^t>0\}$. Panel (d) shows that its updates become dominated by the compatible group, with $\beta_{g^\star,{\rm weak}}^t\to1$ and $\sum_{h\ne g^\star}\beta_{h,{\rm weak}}^t\to0$. Thus EXP3 creates the activity and compatible update composition identified by Equation~\eqref{eq:matched_error_dynamics}.

\paragraph{FEMNIST Classification:}
\label{sec:exp-femnist}
We use $10$ clean clients and $10$ clients with digits $6$ and $9$ rotated by $180^\circ$ without relabeling. Each model is a width $64$ ReLU network with one hidden layer. We set $w_0^0=w_{\rm mix}$ and $w_1^0=(1-\alpha)\xi+\alpha w_{\rm trans}$, where $w_{\rm mix}$ and $w_{\rm trans}$ are pretrained on pooled and transformed clients and $\xi$ is a weak base model. Increasing $\alpha$ moves the unused model toward $w_{\rm trans}$, yet $\mathbf A^0=\left(\begin{smallmatrix}1&0\\1&0\end{smallmatrix}\right)$ for every tested $\alpha$. All four methods run for $800$ communication rounds. The synchronized-entry configuration uses $H_{\max}=800$, while autonomous EXP3 uses $H_{\max}=160$; full controller settings appear in Appendix~\ref{app:femnist-supplement}.

At $W^{800}$, each client selects its lowest held-out-loss model, $r_i^{800,\mathrm{eval}}\in\arg\min_k\widehat L_{i,\mathrm{te}}(w_k^{800})$, and $K_{800}^{\mathrm{eval}}:=|\{r_i^{800,\mathrm{eval}}:i\in[n]\}|$ counts the selected models. Panel (e) reports terminal two-model usability: Greedy IFCA and FedCAM remain at zero, autonomous EXP3 improves with $\alpha$, and synchronized entry reaches a recovery fraction of one from $\alpha=0.55$ onward. At $W^{800}$, $42$ of the $54$ synchronized-entry configurations have two usable models on both training and held-out data. Panels (f), (g), and (h) show the representative run at $\alpha=0.60$, data seed $1$ and selector seed $0$. Panel (f) reports $K_t^{\mathrm{route}}$: synchronized entry routes updates to both models throughout the $800$ rounds, autonomous EXP3 does so in $798$ rounds, and the baselines use one model. Panel (g) reports training-loss-based, controller-free model usability: autonomous EXP3 remains at one usable model, whereas synchronized entry reaches two at $W^{612}$ and retains both through the plotted horizon. Panel (h) reports permutation-invariant assignment accuracy, $A_t^{\mathrm{eval}}:=\max_{m\in\mathfrak S_2}\frac12\sum_g|I_g|^{-1}\sum_{i\in I_g}\mathbf1\{r_i^{t,\mathrm{eval}}=m(g)\}$. Synchronized entry raises this score from $1/2$ to $0.95$ at the final plotted state $W^{799}$, while autonomous EXP3 remains at $1/2$. Thus executed activity, two-model usability, and exact group specialization are distinct: synchronized entry with persistent commitment produces two usable models in this run, but does not achieve perfect group specialization within the evaluated horizon.

\bibliographystyle{plainnat}
\bibliography{refs}

\clearpage
\appendix
\section{Theory Supplement}
\label{app:theory}

This appendix proves the theoretical results stated in the main paper.

\subsection{Favorable greedy specialization}
\label{app:proof_greedy_specialization}

\begin{proof}[Proof of Theorem~\ref{thm:specialist_basin}]
Choose a matching $m\in\mathfrak S_2$ witnessing $W^0\in\mathcal B_{\rm specialize}$ and relabel the clusters so that $m(g)=g$. Let $r_g=\|w_g^0-\theta_g\|_2$. Then $r_0+r_1<\Delta$.
For group $0$, the reverse triangle inequality gives
\[
\|w_1^0-\theta_0\|_2
\ge \Delta-r_1
>r_0
=\|w_0^0-\theta_0\|_2.
\]
The analogous inequality holds for group $1$, so the initial greedy routing is separated. If routing remains separated through round $t$, cluster $g$ receives only the group $g$ gradient, and therefore
\[
w_g^{t+1}-\theta_g
=
(1-\eta)(w_g^t-\theta_g),
\qquad
\|w_g^t-\theta_g\|_2
=
(1-\eta)^tr_g.
\]
At every round,
\[
\|w_{1-g}^t-\theta_g\|_2
\ge
\Delta-\|w_{1-g}^t-\theta_{1-g}\|_2
\ge
\Delta-r_{1-g}
>
r_g
\ge
\|w_g^t-\theta_g\|_2.
\]
Thus the separated preference is preserved by induction. Since $(1-\eta)^t\to0$, each cluster converges to its matched specialist.
\end{proof}

\subsection{Proof of greedy starvation}
\label{app:proof_greedy_starvation}

\begin{proof}[Proof of Theorem~\ref{thm:starvation_basin}]
It suffices to prove the result for $j=1$; the case $j=0$ follows by exchanging the cluster labels. Membership in $\mathcal B_{{\rm starve},1}$ implies
$L_g(w_0^0)<L_g(w_1^0)$ for both groups. Hence both groups initially select cluster $0$, while cluster $1$ receives no update.

Suppose this common preference has persisted through round $t$. Then $w_1^t=w_1^0$, while the active cluster follows
\[
w_0^{t+1}
=
w_0^t-\eta(w_0^t-\bar\theta_\alpha),
\qquad
w_0^t
=
\bar\theta_\alpha
+
(1-\eta)^t(w_0^0-\bar\theta_\alpha).
\]
Thus $w_0^t$ lies on the segment joining $w_0^0$ and $\bar\theta_\alpha$. Convexity of $L_g$ gives
\[
L_g(w_0^t)
\le
\max\{L_g(w_0^0),L_g(\bar\theta_\alpha)\}
<
L_g(w_1^0)
=
L_g(w_1^t).
\]
Both groups therefore continue to select cluster $0$. Induction proves that $w_1^t=w_1^0$ for every $t$, and the explicit recursion gives $w_0^t\to\bar\theta_\alpha$.
\end{proof}

\subsection{Finite horizon recovery under arbitrary routing}
\label{app:proof_finite_horizon_recovery}

Fix a matching $m$ and recall that
$e_g^t=w_{m(g)}^t-\theta_g$,
$a_g^t=s_{m(g)}^t$, and
$c_g^t=\sum_{h\ne g}\beta_{h,m(g)}^t(\theta_h-\theta_g)$.
Recall $E_g(t,T)=\sum_{\tau=t}^{T-1}a_g^\tau$, the number of active updates received by the matched cluster.
Repeated substitution in Equation~\eqref{eq:matched_error_dynamics} gives the exact response
\begin{equation}
e_g^T
=
(1-\eta)^{E_g(0,T)}e_g^0
+
\eta
\sum_{t=0}^{T-1}
a_g^t
(1-\eta)^{E_g(t+1,T)}
c_g^t.
\label{eq:finite_horizon_response}
\end{equation}

\begin{proof}[Proof of Theorem~\ref{thm:selective_exposure}]
Equation~\eqref{eq:finite_horizon_response} follows by repeatedly applying Equation~\eqref{eq:matched_error_dynamics}. Each active round multiplies the existing matched error by $1-\eta$ and adds $\eta c_g^t$, while each inactive round leaves the error unchanged.

Taking norms in Equation~\eqref{eq:finite_horizon_response} and applying the triangle inequality gives
\[
\|e_g^T\|_2
\le
(1-\eta)^{E_g(0,T)}
\|e_g^0\|_2
+
\eta
\sum_{t=0}^{T-1}
a_g^t
(1-\eta)^{E_g(t+1,T)}
\|c_g^t\|_2.
\]
Condition~\eqref{eq:selective_exposure_condition} therefore implies
$\|w_{m(g)}^T-\theta_g\|_2=\|e_g^T\|_2\le r$.
The excess risk conclusion follows from
\[
L_g(w_{m(g)}^T)-L_g(\theta_g)
=
\frac12\|w_{m(g)}^T-\theta_g\|_2^2.
\]

Finally, if $r<\Delta/2$, then
\[
\|w_{m(1-g)}^T-\theta_g\|_2
\ge
\Delta-\|w_{m(1-g)}^T-\theta_{1-g}\|_2
\ge
\Delta-r
>
r
\ge
\|w_{m(g)}^T-\theta_g\|_2.
\]
Thus each group strictly prefers its matched model under terminal greedy routing.
\end{proof}

Equation~\eqref{eq:selective_exposure_condition} separates the two routing requirements. The first term contracts only when the matched cluster receives repeated updates. The second term measures incompatible contributions, discounted by later active updates. Their combination is the persistent selective exposure condition used in the main paper.

\clearpage
\section{Experimental Supplement}
\label{app:experimental-supplement}
\setcounter{figure}{0}
\renewcommand{\thefigure}{B.\arabic{figure}}
\renewcommand{\theHfigure}{B.\arabic{figure}}

\subsection{Shared evaluation and decentralized controller selection}
\label{app:experimental-setup}

Within each experimental cell, all methods use the same clients, corresponding initial cluster models, and local optimization budget. The FEMNIST horizon and evaluation-state conventions are specified below. We record executed routes during training and controller free greedy routes after freezing the evaluated models. Linear and GMM use $\{\phi_{\rm gr},\phi_{\rm rh}\}$, Housing uses $\{\phi_{\rm gr},\phi_{\rm com},\phi_{\rm rev}\}$, and autonomous FEMNIST uses $\{\phi_{\rm gr},\phi_{\rm hys}\}$. In the fixed-block experiments, a selected arm remains fixed within each block $\mathcal T_b$, with feedback averaged over that block. FEMNIST instead uses client-specific action durations and normalized training-loss feedback. Algorithm~\ref{alg:blocked_controller_selection} specifies fixed-block client controller selection, hard route sampling, local training, sample weighted server aggregation, and the block level EXP3 update; Algorithm~\ref{alg:femnist-coordinated-entry} specifies the FEMNIST action timing.

\subsection{Linear supplementary evidence}
\label{app:linear-supplement}

Under the pooled centered initialization of Section~4, the audited fraction of initializations for which both groups prefer the same cluster increases from $0.10$ to $1.00$ across the tested $\rho$ sweep. Figure~B.1 asks whether the online selector maintains a fixed mixture of Greedy and Rank Hedge or adapts controller use as training evolves. For each controller $r$, it reports the client averaged EXP3 probability $\bar p_r^b=n^{-1}\sum_i p_{ir}^b$ and the executed selection fraction $n^{-1}\sum_i\mathbf{1}\{R_i^b=r\}$, smoothed over five blocks and averaged across $147$ Greedy failure cells. EXP3 begins with approximately equal probability on both controllers, so Rank Hedge initially supplies alternative routing. The median cell recovers by block $1$. After recovery, the mean Greedy probability rises steadily from approximately $0.5$ to $0.9$, while the Rank Hedge probability falls to approximately $0.1$. The executed selection fractions closely track these learned probabilities. Thus the online learner does not maintain a fixed exploration rate: once the recovered models make lowest loss routing effective, it increasingly selects Greedy and suppresses unnecessary redirection. Because most of this adaptation occurs after the median recovery block, the figure demonstrates adjustment after recovery rather than showing that EXP3 learned the escaping controller before recovery.

\begin{figure}[H]
    \centering
    \includegraphics[width=0.52\linewidth]{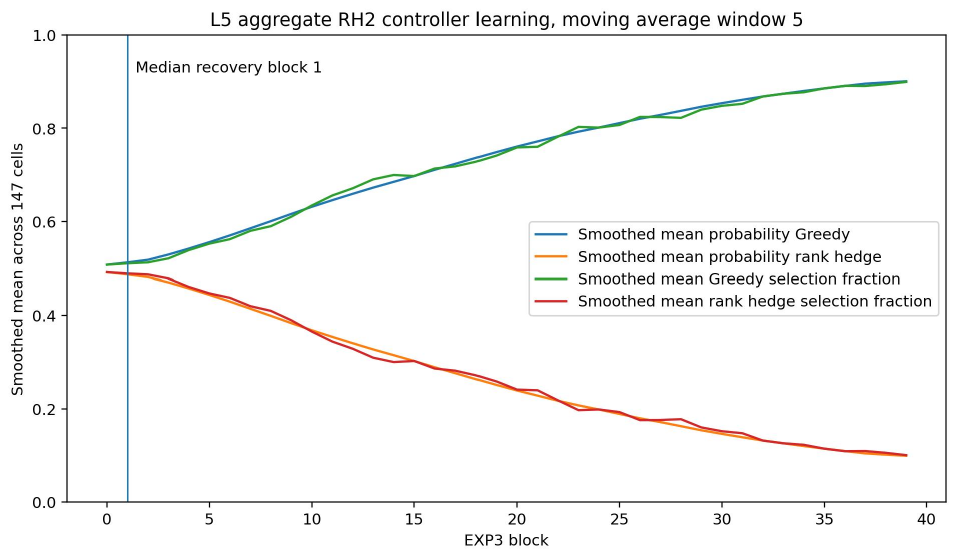}
    \caption{Linear controller adaptation over Greedy failure cells.}
    \label{fig:app-linear-controller}
\end{figure}
\FloatBarrier

\subsection{FEMNIST synchronized-entry specialist construction}
\label{app:femnist-supplement}

The round zero audit covers every $\alpha\in\{.45,.50,.55,.60,.65,.70\}$ and data seed in $\{0,1,2\}$. In all $18$ cells, both groups prefer cluster $0$, so $\mathbf A^0=\begin{psmallmatrix}1&0\\1&0\end{psmallmatrix}$, $K_0=1$, and the common preference frequency is $1.00$.

All four FEMNIST methods are evaluated for $T=800$ rounds, indexed $t=0,\ldots,799$, producing terminal models $W^{800}$. The initialization construction is given in Section~\ref{sec:exp-femnist}. Model parameters use float32 arithmetic, and each executed client update uses one local SGD step with learning rate $0.003$. We evaluate six initialization values, $\alpha\in\{0.45,0.50,0.55,0.60,0.65,0.70\}$, and three data seeds. Each adaptive method additionally uses three selector seeds, giving $54$ runs per adaptive method and $18$ runs per baseline, or $144$ runs overall. The nine adaptive runs at each $\alpha$ comprise three data contexts with three selector seeds each, rather than nine independent datasets.

Both adaptive methods use $H_{\min}=5$, $\epsilon_{\rm exit}=0.005$, $\mu_{\rm rev}=0.20$, $\tau=-0.7$, and EXP3 parameters $\eta_{\rm EXP3}=2.0$ and $\gamma=0.001$. The synchronized-entry method uses $H_{\max}=800$, whereas autonomous EXP3 uses $H_{\max}=160$. Consequently, this comparison evaluates these controller configurations and does not isolate synchronization under matched commitment caps.

Selective hysteresis acts only after client $i$ has entered an episode with target $c_i$; it does not choose the target. Equations~\eqref{eq:hysteresis-gap} and~\eqref{eq:hysteresis-controller} give the routing rule. The entry update initializes the target-update counter to one. At each subsequent routing decision the counter records completed target updates. After the minimum commitment period, an episode ends when $G_i^t\le\epsilon_{\rm exit}$ or its own counter reaches $H_{\max}$. The client executes Greedy in that same exit round and becomes eligible for a new controller decision in the following round. These rules apply both to the synchronized-entry episode and to later EXP3-selected episodes.

Since training executes rounds $0,\ldots,799$, an episode beginning at round zero cannot reach an age-$800$ cap-triggered exit within the observed horizon. Competitiveness-triggered exits can occur earlier. Therefore, successful cap-$800$ recovery does not by itself demonstrate successful recovery after a forced cap-triggered handoff to EXP3.

Algorithm~\ref{alg:femnist-coordinated-entry} coordinates only the initial entry. Provisional Greedy choices measure model use but never train a model. The server broadcasts their sample weighted masses, revealing underused candidates. Each client temporarily updates a candidate and evaluates the benefit in \eqref{eq:entry-benefit}. Qualifying clients execute the candidate route together in round $0$ and initialize $(c_i,d_i)$; the others execute their provisional Greedy routes. During the forced hysteresis episode, the client accumulates target updates but does not update EXP3. Upon release it executes Greedy, then becomes eligible for decentralized controller selection in the following round.

A Greedy action lasts one round. Selecting hysteresis either produces a one-round Greedy fallback or begins an episode that persists until its exit condition is met. EXP3 receives feedback only when its selected action completes; unfinished episodes are right-censored at the evaluation horizon. The synchronized entry episode supplies no EXP3 weight update. Define
\begin{equation}
\begin{aligned}
s_i&=\max\!\left\{\min_k\widehat L_{i,\rm tr}(w_k^0),10^{-12}\right\},\\
z_{i,\rm FEM}^t&=\frac1{10}\operatorname{clip}\!\left(
\frac{\widehat L_{i,\rm tr}(w_{a_i^t}^t)}{s_i},0,10\right).
\end{aligned}
\label{eq:femnist-normalized-feedback}
\end{equation}
The completed action's feedback is the mean of these costs over its executed rounds, excluding the Greedy exit round of a released hysteresis episode. The EXP3 update uses the probability with which that action was sampled. Held-out losses are used for evaluation, not controller selection or local optimization.

Panel (e) reports the fraction of runs with two models selected by controller-free, lowest-held-out-loss evaluation at $W^{800}$. Panels (f)--(h) show the pre-update trajectory $W^0,\ldots,W^{799}$: executed routing activity in panel (f), and training-loss-based controller-free usability and assignment accuracy in panels (g) and (h). FedCAM evaluation uses its combined global-plus-cluster predictor.

Training recovery means $K_{\rm tr}(W^{800})=2$. Joint training and held-out recovery requires $K_{\rm tr}(W^{800})=K_{\rm te}(W^{800})=2$. Sustained recovery requires this joint condition at every state $W^{781},\ldots,W^{800}$. Exact specialization additionally requires permutation-invariant assignment accuracy equal to one on both splits, with the maximizing permutation chosen separately for each split. These assignment scores describe recovery of the client groups and are distinct from digit-classification accuracy. Held-out model preferences are evaluated offline and do not determine training routes.

Figure~\ref{fig:app-femnist-cost} reports the last logged executed-routing test loss,
\begin{equation}
L_{799}^{\rm route}=n^{-1}\sum_iL_{i,\rm test}\!\left(w_{r_i^{799}}^{799}\right).
\end{equation}
Curves show means across runs, and shaded regions show one standard deviation. This quantity evaluates the routes actually executed during training, rather than selecting a model by held-out loss. Synchronized entry incurs higher predictive cost than the baselines at smaller $\alpha$, while its mean routed test loss falls below both baselines at $\alpha=0.65$ and $0.70$.

\begin{figure}[H]
    \centering
    \includegraphics[width=0.36\linewidth]{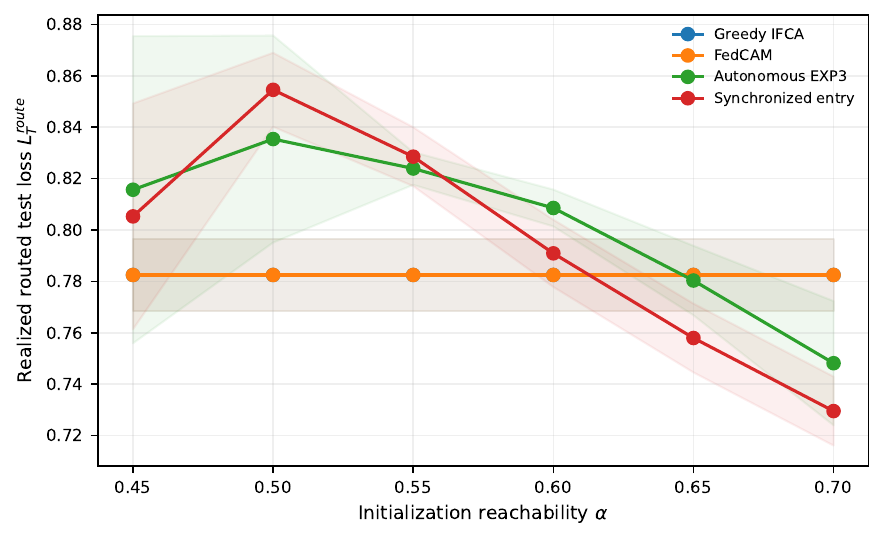}
    \caption{FEMNIST predictive cost under executed routing.}
    \label{fig:app-femnist-cost}
\end{figure}
\FloatBarrier

\subsection{GMM collapse and recovery}
\label{app:gmm-supplement}

We use a multiclass Gaussian mixture model with $G=3$ latent groups, $C=5$ classes, and feature dimension $d=10$. Each group $g\in\{1,2,3\}$ has a feature center $\mu_g\in\mathbb R^d$, and its samples are generated as $x=\mu_g+\sigma\xi,
\qquad
\xi\sim\mathcal N(0,I_d).$
Clients are pure: every client $i$ belongs to one latent group $g_i$, and all of its samples follow the corresponding group distribution. We use ten clients per group.

Labels follow a mismatch model containing shared and group specific components. For class $c\in\{1,\ldots,C\}$ and group $g$, the generating logit and label are
\[
r_{g,c}(x)=\alpha u_c^\top x+\lambda v_{g,c}^\top x+b_{g,c},
\qquad
y=\arg\max_{c\in\{1,\ldots,C\}}r_{g,c}(x).
\]
Here $U\in\mathbb R^{C\times d}$ contains the shared class directions $u_c$, $V_g\in\mathbb R^{C\times d}$ contains the group specific directions $v_{g,c}$, and $b_g\in\mathbb R^C$ is the group specific class bias. Thus clients share part of the classification rule while retaining group dependent decision boundaries. The server trains three linear softmax models for $800$ rounds. The far initialization $W_k^0=R_{\rm far}D_k$
scales fixed, class centered directions $D_k$ of unit Frobenius norm chosen independently of the group specific classifiers.

Figure~\ref{fig:app-gmm} reports the endpoint collapse frequency $\widehat{\Pr}(K_T^{\rm eval}<3)$, permutation invariant terminal assignment $A_T$, and final routed test loss. Greedy IFCA and FedCAM remain vulnerable across the radius sweep, while EXP3 usually restores three group specialization and substantially reduces routed loss on the Greedy collapse cells. The larger variation at $R_{\rm far}=12$ marks a harder recovery regime rather than a monotone initialization severity scale.

\begin{figure}[H]
    \centering
    \includegraphics[width=0.86\linewidth]{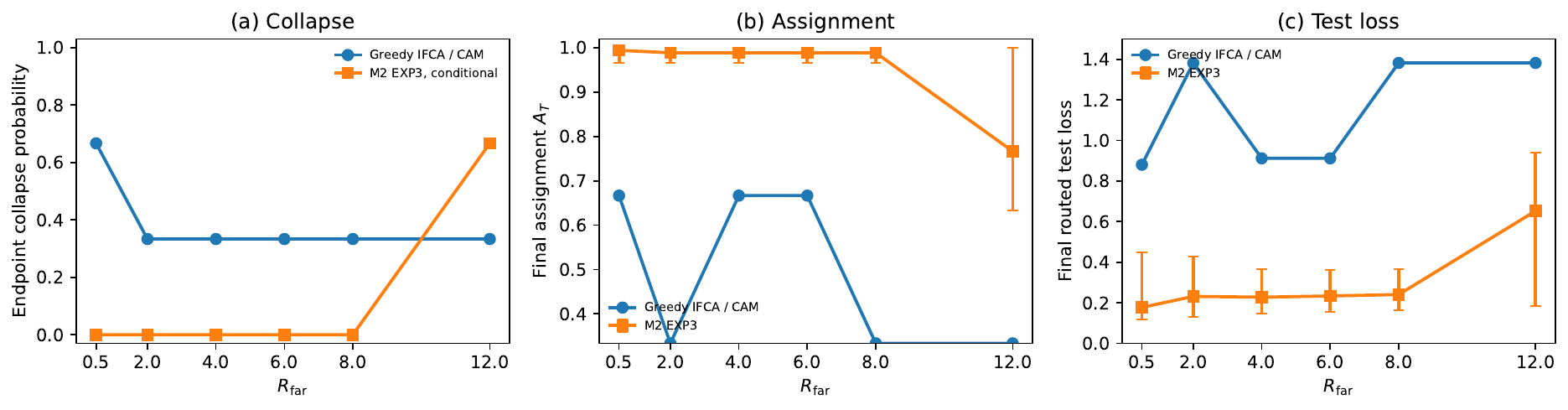}
    \caption{GMM terminal structure and loss across the initialization radius.}
    \label{fig:app-gmm}
\end{figure}
\FloatBarrier

\subsection{Housing collapse and recovery}
\label{app:housing-supplement}

We use California Housing as a real tabular regression benchmark. Each census block group has eight housing and demographic features, with median house value as the target. Let $q_{0.75}^{\rm tr}$ be the global-training $0.75$ quantile of average occupancy per household, \texttt{AveOccup}. We assign a block group to group $0$ if $\texttt{AveOccup}<q_{0.75}^{\rm tr}$ and to group $1$ otherwise. The groups share the feature space and target but represent different occupancy regimes.

Each cluster model is a linear regressor with an intercept. We standardize the eight features using global-training statistics and append a constant coordinate, giving $x\in\mathbb R^9$ and $f_w(x)=x^\top w$. For client $i$, the empirical objective is $L_i(w)=n_i^{-1}\sum_{j=1}^{n_i}(x_{ij}^\top w-y_{ij})^2$. For group oracle parameters $\beta_0^\star$ and $\beta_1^\star$, let $\Delta_\beta=\|\beta_0^\star-\beta_1^\star\|_2$. Along fixed unit directions $d_g$, define $u_g(R)=\beta_g^\star+R\Delta_\beta d_g$ and center the initial cluster models near $\bar u(R)=\frac12\bigl(u_0(R)+u_1(R)\bigr)$.

\begin{wrapfigure}{r}{0.38\linewidth}
    \vspace{-0.8em}
    \centering
    \includegraphics[width=0.98\linewidth]{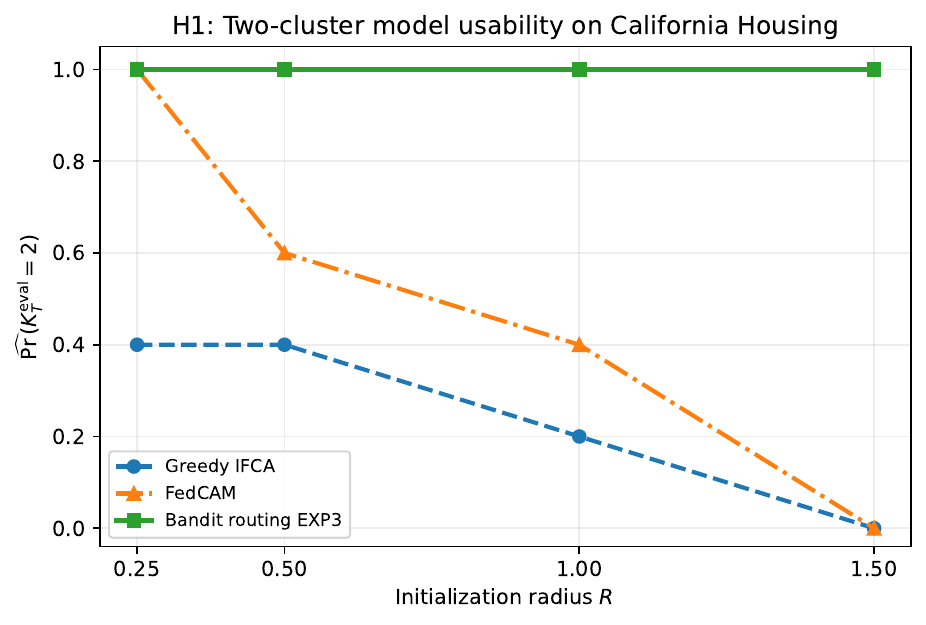}
    \caption{Housing model usability.}
    \label{fig:app-housing}
    \vspace{-0.8em}
\end{wrapfigure}

Figure~\ref{fig:app-housing} reports $\widehat{\Pr}(K_T^{\rm eval}=2)$, the fraction of runs in which both frozen terminal models receive at least one controller free greedy client. EXP3 retains two usable models throughout the tested radius sweep, while Greedy IFCA and FedCAM increasingly collapse as the initialization radius $R$ grows.

Housing uses $\Pi_{\rm house}=\{\phi_{\rm gr},\phi_{\rm rev},\phi_{\rm com}\}$. For low mass revival, fix $\mu_{\rm rev}\in(0,1)$ and define $\mathcal K_{\rm weak}^t=\{k\in[K]:\mu_k^{t-1}<\mu_{\rm rev}\}$. Let $k_{\min}(t)\in\arg\min_{k\in[K]}\mu_k^{t-1}$ be a least used cluster. The revival controller is
\[
\phi_{\rm rev}(H_i^t)
=
\begin{cases}
e_{k_{\min}(t)}, & \mathcal K_{\rm weak}^t\neq\varnothing,\\
e_{k_i^{(1)}(t)}, & \text{otherwise}.
\end{cases}
\]
Thus revival routes the client to the least used cluster whenever some cluster has mass below $\mu_{\rm rev}$; otherwise it uses Greedy.

A single nongreedy update may not make a weak model competitive, so the commitment controller lets client $i$ retain target $c_i^t$ for several rounds. Let $\tau_i^t$ be the remaining duration and $\kappa_{\rm com}\ge 1$ the accepted loss tolerance. The controller is
\[
\phi_{\rm com}(H_i^t)
=
\begin{cases}
e_{c_i^t},
&
\tau_i^t>0
\ \text{and}\
\widehat L_i(w_{c_i^t}^t)
\le
\kappa_{\rm com}
\widehat L_i\!\left(w_{k_i^{(1)}(t)}^t\right),\\
e_{k_i^{(1)}(t)},
&
\text{otherwise}.
\end{cases}
\]
For a committed nongreedy route, the client stores its target and remaining duration. Several updates can then accumulate before reevaluation, while the loss tolerance prevents continued commitment to an excessively poor target.
\FloatBarrier

\end{document}